\newtheorem{definition}{Definition}{}
\newtheorem{corollary}{Corollary}{}
{}
\newtheorem{theorem}{Theorem}{}
{}
\newtheorem{lemma}{Lemma}{}
\newtheorem{example}{Example}{}
\title{Hybrid Zonotopes Exactly Represent ReLU Neural Networks}
\author{Joshua Ortiz, Alyssa Vellucci, Justin Koeln, Justin Ruths
\thanks{The authors are with the Department of Mechanical Engineering, The University of Texas at Dallas. Correspondence to {\tt\small jruths@utdallas.edu}.}
}
\begin{document}

\maketitle
\thispagestyle{empty}
\pagestyle{empty}

\begin{abstract}
We show that hybrid zonotopes offer an equivalent representation of feed-forward fully connected neural networks with ReLU activation functions. Our approach demonstrates that the complexity of binary variables is equal to the total number of neurons in the network and hence grows linearly in the size of the network. We demonstrate the utility of the hybrid zonotope formulation through three case studies including nonlinear function approximation, MPC closed-loop reachability and verification, and robustness of classification on the MNIST dataset.
\end{abstract}

\section{Introduction}
We leverage the recently introduced concept of a hybrid zonotope to develop an equivalent representation of feed-forward fully connected neural networks whose activation functions are rectified linear activation functions (ReLU) \cite{bird2021hybrid}. We show that this exact and analytic expression of the input-output mapping of the neural network enables verification and robustness quantification for general neural networks and, when neural networks are embedded in closed-loop control applications, facilitates closed-loop reachability analysis and safety guarantees. This work was developed separately from, and in parallel to, a recently posted article \cite{zhang2022reachability}. The overall goal of our paper and use of hybrid zonotopes is similar, however, we show our approach leads to linear growth in complexity, whereas the complexity in \cite{zhang2022reachability} grows exponentially.

To verify neural networks, a wide range of methods have been proposed to upper and lower bound the output of neural networks. In \cite{weng2018towards}, preactivation upper and lower bounds are calculated and propagated layer-by-layer to calculate an overall Lipschitz constant bound for the entire network. Due to their speed these iterative approaches continue to be used (e.g., \cite{shi2022efficiently}). However, it requires that bounds be recomputed if the input set changes and the iterative bounding erases any memory that exists between the layers. Semi-definite programming relaxations have been used to regain the layer-to-layer history in a computationally efficient manner \cite{fazlyab2019efficient} and mixed-integer linear programming (MILP) has been used to compute exact Lipschitz bounds \cite{jordan2020exactly}. In the context of control systems, closed-loop reachability has been investigated with similar methods \cite{hashemi2021certifying,everett2021reachability}.

Hybrid zonotopes are a generalization of constrained zonotopes, which were introduced several years earlier \cite{scott2016constrained}. Like our paper here, authors have leveraged constrained zonotopes \cite{chung2021constrained,zhang2022safety} or polytopes \cite{xiang2018reachable} to propagate sets through approximated ReLU activation functions to compute bounds on the output reachability of neural networks. \cite{wong2018provable} demonstrates how these analytic expression of relaxations of ReLU neural networks can be used to, for example, train networks to be more robust to adversarial attack. Star sets offer similar expressiveness as hybrid zonotopes and have also been used to provide approximate and exact reachability of feed-forward ReLU neural networks \cite{tran2019star}. However, like \cite{zhang2022reachability}, the computational complexity of the mixed-integer program (number of binary variables) grows exponentially. 

At the heart of our approach is the ability of hybrid zonotopes to maintain an exact analytic expression from input to output. Crucially, this expression preserves how previous layers impact the propagation of the sets without iterative calculations or overapproximation. In contrast to MILP approaches (e.g., \cite{tjeng2017evaluating}), which also provide exact input-output relationships for ReLU neural network certification, the hybrid zonotope representation, due to the memory feature of zonotopes, enables it to be used in a variety of ways, including for closed loop analysis.


\textbf{Notation:} Matrices are denoted by uppercase letters, e.g., $G\in\mathbb{R}^{n\times n_g}$, and sets by uppercase calligraphic letters, e.g., $\mathcal{Z}\subset\mathbb{R}^{n}$. Vectors and scalars are denoted by lowercase letters. The $n$-dimensional unit hypercube is denoted by $\mathcal{B}_{\infty}^n=\left\{x\in\mathbb{R}^{n}~|~\|x\|_{\infty}\leq1\right\}$. The set of all $n$-dimensional binary vectors is denoted by $\{-1,1\}^{n}$. Matrices and vectors of all $0$ and $1$ elements are denoted by $\mathbf{0}$ and $\mathbf{1}$, respectively, of appropriate dimension. The Kronecker product of matrices $ A \in 
 \mathbb{R}^{m \times n} $ and $ B \in 
 \mathbb{R}^{p \times q} $ is given as $ A \otimes B \in \mathbb{R}^{pm \times qn}$.

Given the sets $\mathcal{Z},\mathcal{W}\subset\mathbb{R}^{n},\:\mathcal{Y}\subset\mathbb{R}^{m}$, and matrix $R\in\mathbb{R}^{m\times n}$, the linear mapping of $\mathcal{Z}$ by $R$ is $R\mathcal{Z}=\{Rz~|~z\in\mathcal{Z}\}$, the Minkowski sum of $\mathcal{Z}$ and $\mathcal{W}$ is $\mathcal{Z}\oplus\mathcal{W}=\{z+w~|~z\in\mathcal{Z},\:w\in\mathcal{W}\}$, the generalized intersection of $\mathcal{Z}$ and $\mathcal{Y}$ under $R$ is $\mathcal{Z}\cap_R\mathcal{Y}=\{z\in\mathcal{Z}~|~Rz\in\mathcal{Y}\}$, and the Cartesian product of $\mathcal{Z}$ and $\mathcal{Y}$ is $\mathcal{Z}\times\mathcal{Y}=\{(z,y)|~z\in\mathcal{Z},\:y\in\mathcal{Y}\}$.

\section{Hybrid Zonotopes} \label{sec:HybZono}
\begin{definition} \label{def-hybridZono} \cite[Def. 3]{bird2021hybrid}
The set $\mathcal{Z}_h\subset\mathbb{R}^n$ is a \emph{hybrid zonotope} if there exists $G^c\in\mathbb{R}^{n\times n_{g}}$, $G^b\in\mathbb{R}^{n\times n_{b}}$, $c\in\mathbb{R}^{n}$, $A^c\in\mathbb{R}^{n_{c}\times n_{g}}$, $A^b\in\mathbb{R}^{n_{c}\times n_{b}}$, and $b\in\mathbb{R}^{n_c}$ such that
    \begin{equation}\label{def-eqn-hybridZono}
        \mathcal{Z}_h = \left\{ \left[G^c \: G^b\right]\left[\begin{smallmatrix}\xi^c \\ \xi^b \end{smallmatrix}\right]  + c\: \middle| \begin{matrix} \left[\begin{smallmatrix}\xi^c \\ \xi^b \end{smallmatrix}\right]\in \mathcal{B}_\infty^{n_{g}} \times \{-1,1\}^{n_{b}}, \\ \left[A^c \: A^b\right]\left[\begin{smallmatrix}\xi^c \\ \xi^b \end{smallmatrix}\right] = b \end{matrix} \right\}\:.
\end{equation}
\end{definition}

A hybrid zonotope is the union of $2^{n_b}$ constrained zonotopes corresponding to the possible combinations of binary factors, $\xi^b$. The hybrid zonotope is given in \textit{Hybrid Constrained Generator-representation} and the shorthand notation of $\mathcal{Z}_h=\langle G^c,G^b,c,A^c,A^b,b\rangle\subset\mathbb{R}^n$ is used to denote the set given by \eqref{def-eqn-hybridZono}. Continuous and binary \textit{generators} refer to the columns of $G^c$ and $G^b$, respectively. A hybrid zonotope with no binary generators is a constrained zonotope, $\mathcal{Z}_c=\langle G,c,A,b\rangle\subset\mathbb{R}^n$, and a hybrid zonotope with no binary generators and no constraints is a zonotope, $\mathcal{Z}=\langle G,c\rangle\subset\mathbb{R}^n$. Identities and time complexity of linear mappings, Minkowski sums, generalized intersections, and generalized half-space intersections are reported in \cite[Section 3.2]{bird2021hybrid}. An identity and time complexity for Cartesian products is given in \cite{bird2022dissertation}. Preliminary methods for removing redundant generators and constraints of a hybrid zonotope were reported in \cite{bird2021hybrid} and further developed in \cite{bird2022dissertation}.

\begin{example}[Zonotope Memory (see also \cite{kochdumper2020utilizing})]
If we consider a dynamic system $x_{k+1}=Ax_k+Bu_k$, with $x_0\in\mathcal{X}_0 = \langle G_x^c,G_x^b,c_x,A_x^c,A_x^b,b_x\rangle$ and $u_0\in \mathcal{U}=\langle G_u^c,G_u^b,c_u,A_u^c,A_u^b,b_u\rangle$, then by the linear mapping and Minkowski sum identities, 
\begin{equation}
\begin{aligned}
    x_1\in \mathcal{X}_1 = \Bigg\langle [AG_x^c \; &BG_u^c], [AG_x^b \; BG_u^b], Ac_x + Bc_u, \\ &\begin{bmatrix}A_x^c&\\&A_u^c\end{bmatrix} ,\begin{bmatrix}A_x^b&\\&A_u^b\end{bmatrix} , \begin{bmatrix}b_x\\b_u\end{bmatrix} \Bigg\rangle.
\end{aligned}
\end{equation}
The effect of the linear mapping is encoded in the transformation of the continuous and binary \underline{generators}. The memory is, however, captured in the fact that the continuous and binary \underline{factors} that define the sets $\mathcal{X}_0$ and $\mathcal{U}$ are embedded in the definition of $\mathcal{X}_1$. To ignore this connection - what happens with iterative methods - is to erase this memory by treating the factors of $\mathcal{X}_1$ as new unrelated factors. This feature of memory is especially visible when creating extended vectors, e.g., 
\begin{equation*}
\begin{aligned}
    \begin{bmatrix} x_0 \\ x_1 \end{bmatrix} \in \Bigg\langle
        &\begin{bmatrix} G_x^c&\\AG_x^c & BG_u^c \end{bmatrix},
        \begin{bmatrix} G_x^b&\\AG_x^b & BG_u^b \end{bmatrix},
        \begin{bmatrix} c_x\\Ac_x + Bc_u \end{bmatrix}, \\
        &\hspace{10mm}\begin{bmatrix}A_x^c&\\A_x^c&\\&A_u^c\end{bmatrix}, 
        \begin{bmatrix}A_x^b&\\A_x^b&\\&A_u^b\end{bmatrix},
        \begin{bmatrix}b_x\\b_x\\b_u\end{bmatrix}
    \Bigg\rangle.
\end{aligned}
\end{equation*}
Specifically, the continuous and binary factors that specify the hybrid zontope $\mathcal{X}_1$ are $[(\xi_x^c)^\top \; (\xi_u^c)^\top]^\top$ and $[(\xi_x^b)^\top \; (\xi_u^b)^\top]^\top$, respectively, where $\xi_x^c$ and $\xi_x^b$ specify $\mathcal{X}_0$ and $\xi_u^c$ and $\xi_u^b$ specify $\mathcal{U}$. There are no new factors introduced specific to $\mathcal{X}_1$ - all are inherited from $\mathcal{X}_0$ or $\mathcal{U}$. In this case the redundant constraints can be dropped.
\end{example}

\section{Neural Network Representation as a Hybrid Zonotope}

Consider an $L$-layered feed-forward fully-connected neural network $f: \mathbb{R}^n \rightarrow \mathbb{R}^m $ mapping inputs $ x \in \mathbb{R}^n $ to outputs $ y = f(x) \in \mathbb{R}^m $ such that 
\begin{align}
    \begin{split}
        x^0 &= x, \\
        x^{\ell+1} &= \phi(W^{\ell} x^{\ell} + b^{\ell}), \qquad {\ell} \in \{0,\cdots,L-1\}, \\
        y = f(x) &= W^{L} x^{L} + b^{L},
        \end{split}
\end{align}
where $ W^{\ell} \in \mathbb{R}^{n_{\ell+1} \times n_{\ell}} $ and $ b^{\ell} \in \mathbb{R}^{n_{\ell+1}} $ are the weight matrix and bias vector between layers $\ell$ and $\ell+1$, with $n_0 = n$ and $n_{L+1} = m $. For this paper, all activation functions $\phi$ are ReLU functions that operate element-wise, i.e., for the pre-activation vector $ v^{\ell+1} = W^{\ell} x^{\ell} + b^{\ell} \in \mathbb{R}^{n_{\ell+1}}$, 
\begin{equation}
    \phi(v^{\ell}) = [\varphi(v_1^{\ell}) \; \cdots  \; \varphi(v^{\ell}_{n_{\ell}})]^\top,
\end{equation}
where the ReLU function is defined as $\varphi(v_i) = \max\{0,v_i\} $.

The goal of this paper is to show that such a feed-forward neural network with only ReLU activation units can be exactly - and analytically - represented by a hybrid zonotope. Core to our approach is the idea that \textit{sets} (including hybrid zonotopes) can be used to describe \textit{functions}. We can accomplish this by expressing an extended set that expresses the input-to-output relationship. Given the input set $\mathcal{X}\subset\mathbb{R}^n$ and output set $\mathcal{Y}=\{f(x) : x\in\mathcal{X} \}\subset\mathbb{R}^m$, the function $f$ over the input set $\mathcal{X}$ can be expressed by the set $ \mathcal{F} \subset \mathbb{R}^{n+m} $ of extended vectors $ [ x^\top \; y^\top ]^\top \in \mathbb{R}^{n+m} $ such that $ \mathcal{F} = \{  [ x^\top \; y^\top ]^\top : y = f(x), x \in \mathcal{X} \}$. Zonotope-based sets especially enable function representations through the ``memory'' engendered by the factors. The crucial connection is that some of the factors that define the output zonotope set are factors derived from the input zonotope.

\begin{figure}[t]
     \centering
     \includegraphics[width=0.85\linewidth]{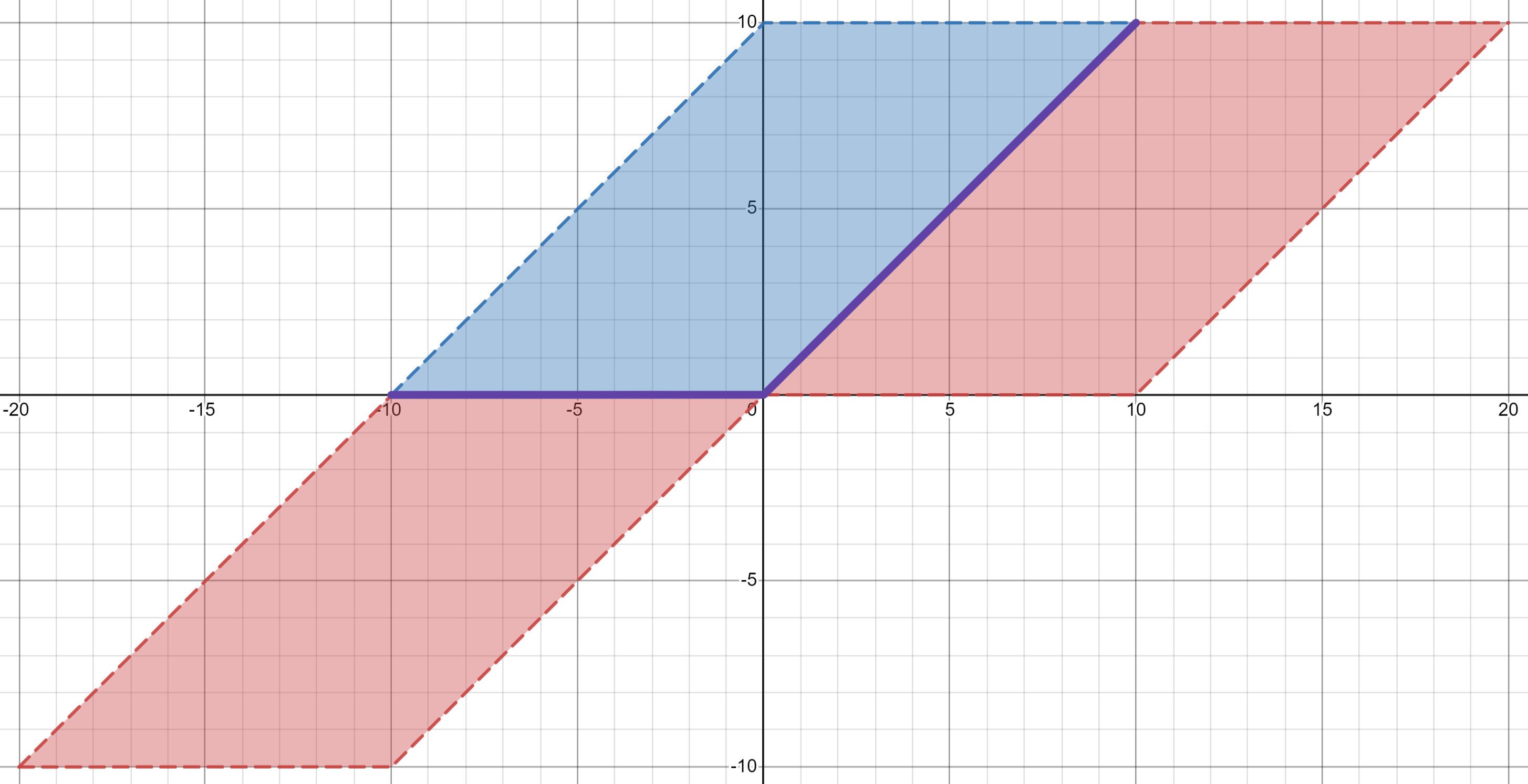}
     \caption{The intersection of the blue zonotope, $\mathcal{Z}_1$, and red hybrid zonotope, $\mathcal{Z}_2 \cup \mathcal{Z}_3$, is equivalent to the ReLU function (purple) over the interval $[-a, a]$ (here shown for $a=10$).}
    \label{fig:relu}
\end{figure}

To begin, we represent a single ReLU activation function $ x_i = \varphi(v_i) $ as a set of points $ [ v_i \; x_i ]^\top \in \mathbb{R}^{2} $ over a predetermined domain $ v_i \in [-a, a] $. We assume $ a > 0 $ is chosen large enough to capture the largest anticipated absolute value of the input $ v_i $ such that $ |v_i|\leq a $. 

\begin{lemma} \label{lem:relu}
    The set $ \Phi \subset \mathbb{R}^2 $ of points satisfying the ReLU activation function over the domain $ \mathcal{D}_i = [-a, a] $ can be exactly represented as a hybrid zonotope with $ n_g = 4 $ continuous generators, $ n_b = 1 $ binary generator, and $ n_c = 2 $ constraints.
\end{lemma}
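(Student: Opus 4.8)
The plan is to realize $\Phi$ as the ordinary intersection of a zonotope with a hybrid zonotope, exactly as in \figref{fig:relu}, and then read off the generator and constraint counts from the hybrid-zonotope intersection identity. First I would decompose the graph: over $\mathcal{D}_i=[-a,a]$ the set $\Phi=\{[v_i\;\varphi(v_i)]^\top : v_i\in[-a,a]\}$ is the union of the two line segments $S_1=\{[v\;0]^\top : v\in[-a,0]\}$ and $S_2=\{[v\;v]^\top : v\in[0,a]\}$, which meet only at the origin. The useful observations are that every point of $\Phi$ satisfies $x_i\ge 0$ and $x_i\ge v_i$, i.e.\ $\Phi\subseteq\mathcal{C}:=\{x_i\ge 0\}\cap\{x_i\ge v_i\}$, while in addition $S_1\subseteq\{x_i\le 0\}$ and $S_2\subseteq\{x_i\le v_i\}$.

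Next I would choose the two ingredient sets. Let $\mathcal{Z}_1$ be the parallelogram spanned from the origin by the edge vectors $[-a\;0]^\top$ and $[a\;a]^\top$; this is a zonotope with $2$ continuous generators and no constraints, it has $S_1$ and $S_2$ as two of its edges, and $\mathcal{Z}_1\subseteq\mathcal{C}$. For the hybrid part I would take $\mathcal{Z}_h=\mathcal{Z}_2\cup\mathcal{Z}_3$ with a \emph{single} binary generator and no constraints, so that $\mathcal{Z}_2$ and $\mathcal{Z}_3$ are necessarily translates of one common parallelogram (sharing the continuous-generator matrix, with centers offset by twice the binary generator). That common parallelogram is chosen with one horizontal edge of length at least $a$ and one edge of length at least $a\sqrt{2}$ in the direction $[1\;1]^\top$; the first copy $\mathcal{Z}_2$ is placed in the half-plane $\{x_i\le 0\}$ with its horizontal edge covering $S_1$, and the translated copy $\mathcal{Z}_3$ is placed in the half-plane $\{x_i\le v_i\}$ with its diagonal edge covering $S_2$. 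A short computation produces an admissible translation vector and the explicit generator matrices, which I would state.

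Then I would verify the intersection. Since $\mathcal{Z}_1\subseteq\{x_i\ge 0\}$ and $\mathcal{Z}_2\subseteq\{x_i\le 0\}$, the intersection $\mathcal{Z}_1\cap\mathcal{Z}_2$ lies in the line $\{x_i=0\}$, on which $\mathcal{Z}_1$ reduces to exactly $S_1$ and $\mathcal{Z}_2$ covers $S_1$; hence $\mathcal{Z}_1\cap\mathcal{Z}_2=S_1$, and symmetrically $\mathcal{Z}_1\cap\mathcal{Z}_3=S_2$ (the kink point needs no special treatment, being a common vertex of all three sets). Therefore $\mathcal{Z}_1\cap\mathcal{Z}_h=(\mathcal{Z}_1\cap\mathcal{Z}_2)\cup(\mathcal{Z}_1\cap\mathcal{Z}_3)=S_1\cup S_2=\Phi$. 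Finally, applying the generalized-intersection identity of \cite[Section~3.2]{bird2021hybrid} under the identity map to the zonotope $\mathcal{Z}_1$ ($2$ continuous generators, $0$ binary, $0$ constraints) and the hybrid zonotope $\mathcal{Z}_h$ ($2$ continuous generators, $1$ binary, $0$ constraints) produces a hybrid zonotope with $2+2=4$ continuous generators, $1$ binary generator, and $0+0+2=2$ constraints — the two added constraints forcing the point chosen in $\mathcal{Z}_1$ to equal the point chosen in $\mathcal{Z}_h$ — which is exactly the claimed $(n_g,n_b,n_c)=(4,1,2)$.

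The main obstacle is the second step: the two pieces $S_1$ and $S_2$ of the ReLU graph differ in both direction and length, yet using only one binary generator and no constraints forces $\mathcal{Z}_2$ and $\mathcal{Z}_3$ to be plain translates of a single parallelogram. Designing that parallelogram and the translation so that $\mathcal{Z}_1$ slices one copy along exactly $S_1$ and the other along exactly $S_2$ — leaving no spurious points and keeping the kink — is the delicate part; once the geometry is fixed, the remainder is bookkeeping with the identities already tabulated in \cite{bird2021hybrid}.
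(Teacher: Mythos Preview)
Your proposal is correct and follows essentially the same approach as the paper: decompose $\Phi$ into the two segments, realize it as $\mathcal{Z}_1\cap(\mathcal{Z}_2\cup\mathcal{Z}_3)$ where $\mathcal{Z}_2,\mathcal{Z}_3$ are translates of a common parallelogram (so their union is an unconstrained hybrid zonotope with one binary generator), and then apply the intersection identity to obtain $(n_g,n_b,n_c)=(4,1,2)$. The paper in fact chooses $\mathcal{Z}_1,\mathcal{Z}_2,\mathcal{Z}_3$ all congruent to the \emph{same} parallelogram and justifies $\mathcal{Z}_1\cap\mathcal{Z}_2=\Phi_1$, $\mathcal{Z}_1\cap\mathcal{Z}_3=\Phi_2$ by appeal to \figref{fig:relu}, whereas your half-plane containment argument makes that step explicit.
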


\begin{proof}
    To satisfy the ReLU function for $ v_i \in [-a, a] $, $ \Phi = \{ [ v_i \; x_i ]^\top : x_i = \max\{0,v_i\}, |v_i|\leq a \}$. Note that $ \Phi = \Phi_1 \cup \Phi_2 $, where $ \Phi_1 = \{ [ v_i \; x_i ]^\top : -a \leq v_i \leq 0, x_i = 0 \} $ and $ \Phi_2 = \{ [ v_i \; x_i ]^\top : 0 \leq v_i \leq a, x_i = v_i \} $.  To represent this set, consider the zonotopes 
    \begin{equation}
    \begin{aligned}
    \mathcal{Z}_1 &= \left\langle \begin{bmatrix} \nicefrac{a}{2} & \nicefrac{a}{2} \\ \nicefrac{a}{2} & 0 \end{bmatrix}, \begin{bmatrix} 0 \\ \nicefrac{a}{2} \end{bmatrix} \right\rangle, \; 
    \mathcal{Z}_2 = \left\langle \begin{bmatrix} \nicefrac{a}{2} & \nicefrac{a}{2} \\ \nicefrac{a}{2} & 0 \end{bmatrix}, \begin{bmatrix} -a \\ -\nicefrac{a}{2} \end{bmatrix} \right\rangle, \\ 
    \mathcal{Z}_3 &= \left\langle \begin{bmatrix} \nicefrac{a}{2} & \nicefrac{a}{2} \\ \nicefrac{a}{2} & 0 \end{bmatrix}, \begin{bmatrix} a \\ \nicefrac{a}{2} \end{bmatrix} \right\rangle.
    \end{aligned}
    \end{equation}
    As shown in Figure \ref{fig:relu}, $\mathcal{Z}_1$, $\mathcal{Z}_2$, and $\mathcal{Z}_3$ are the same parallelogram with shifted centers. Additionally, Figure \ref{fig:relu} shows that $\mathcal{Z}_1 \cap \mathcal{Z}_2 = \Phi_1 $ and $\mathcal{Z}_1 \cap \mathcal{Z}_3 = \Phi_2 $. Therefore, $ \Phi = (\mathcal{Z}_1 \cap \mathcal{Z}_2) \cup (\mathcal{Z}_1 \cap \mathcal{Z}_3) = \mathcal{Z}_1 \cap (\mathcal{Z}_2 \cup \mathcal{Z}_3)$. Note that $ \mathcal{Z}_2 \cup \mathcal{Z}_3 $ can be directly expressed as the unconstrained hybrid zonotope
    \begin{equation}
        \mathcal{Z}_2 \cup \mathcal{Z}_3 = \left\langle \begin{bmatrix} \nicefrac{a}{2} & \nicefrac{a}{2} \\ \nicefrac{a}{2} & 0 \end{bmatrix}, \begin{bmatrix} a \\ \nicefrac{a}{2} \end{bmatrix}, \begin{bmatrix} 0 \\ 0 \end{bmatrix}, [\,], [\,], [\,] \right\rangle.
    \end{equation}
    Then, using the identities for the intersection operation in \cite{bird2021hybrid} (see Proposition 7),  
    \begin{align}
        \Phi &= \mathcal{Z}_1 \cap (\mathcal{Z}_2 \cup \mathcal{Z}_3),\\
        &= \Bigg\langle \begin{bmatrix} \nicefrac{a}{2} & \nicefrac{a}{2} & 0 & 0 \\ \nicefrac{a}{2} & 0 & 0 & 0 \end{bmatrix}, \begin{bmatrix} 0 \\ 0 \end{bmatrix}, \begin{bmatrix} 0 \\ \nicefrac{a}{2} \end{bmatrix}, \\
        &\hspace{15mm}\begin{bmatrix} \nicefrac{a}{2} & \nicefrac{a}{2} & -\nicefrac{a}{2} & -\nicefrac{a}{2} \\ \nicefrac{a}{2} & 0 & -\nicefrac{a}{2} & 0 \end{bmatrix}, \begin{bmatrix} -a \\ -\nicefrac{a}{2} \end{bmatrix}, \begin{bmatrix} 0 \\ -\nicefrac{a}{2} \end{bmatrix} \Bigg\rangle,\nonumber
    \end{align}
    with $ n_g = 4 $ continuous generators, $ n_b = 1 $ binary generator, and $ n_c = 2 $ constraints. 
\end{proof}

We can now assemble the individual ReLU functions as a hybrid zonotope into a hybrid zonotope representation for the entire neural network.

\begin{theorem}
    A feed-forward ReLU neural network $f:\mathbb{R}^n\to\mathbb{R}^m$ with $n_N$ total ReLU activation functions (neurons) can be exactly represented as a hybrid zonotope with $ n_g = 4 n_N $ continuous generators, $ n_b = n_N $ binary generators, and $ n_c = 3 n_N $ constraints.
\end{theorem}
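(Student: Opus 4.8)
The plan is to prove the theorem by induction on the layers of the network, using Lemma~\ref{lem:relu} as the single-neuron building block and composing the layers with the Cartesian-product, linear-map, and generalized-intersection identities of \cite{bird2021hybrid}, each of which is exact. Throughout, fix an input set $\mathcal{X}\subset\mathbb{R}^n$ (itself a hybrid zonotope) and assume, as in the paragraph preceding Lemma~\ref{lem:relu}, that $a$ is large enough that every preactivation $v_i^\ell$ stays in $[-a,a]$ for every $x\in\mathcal{X}$; the set $\mathcal{F}=\{[x^\top\;y^\top]^\top : y=f(x),\,x\in\mathcal{X}\}$ we construct will then be represented by a hybrid zonotope whose description consists of the generators and constraints of $\mathcal{X}$ augmented by the $4n_N$ continuous generators, $n_N$ binary generators, and $3n_N$ constraints of the statement.

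First I would lift Lemma~\ref{lem:relu} from a single neuron to a full ReLU layer. For a layer with $n_\ell$ neurons, the map $v\mapsto\phi(v)$ on the box $[-a,a]^{n_\ell}$ is, up to a fixed coordinate permutation, the $n_\ell$-fold Cartesian product of the single-neuron set $\Phi$ of Lemma~\ref{lem:relu}; writing this compactly via Kronecker products against $I_{n_\ell}$ gives a hybrid zonotope $\Phi_\ell\subset\mathbb{R}^{2n_\ell}$ of the pairs $[v^\top\;\phi(v)^\top]^\top$ with exactly $4n_\ell$ continuous generators, $n_\ell$ binary generators, and $2n_\ell$ constraints, since Cartesian product concatenates generators and constraints blockwise and the permutation is a linear map that changes none of the counts.

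Next I would compose the layers, maintaining the invariant that there is a hybrid zonotope $\mathcal{H}_\ell$ exactly representing the extended vectors $[x^\top\;(x^\ell)^\top]^\top$ with $x\in\mathcal{X}$ and $x^\ell$ the $\ell$-th activation, with $\mathcal{H}_0=\mathcal{X}$. Given $\mathcal{H}_\ell$, I would (i) form $\mathcal{H}_\ell\times\Phi_{\ell+1}$, whose vectors are $[x^\top\;(x^\ell)^\top\;v^\top\;w^\top]^\top$ with $w=\phi(v)$ and $v\in[-a,a]^{n_{\ell+1}}$; (ii) impose the affine preactivation relation $v=W^\ell x^\ell+b^\ell$ by the generalized intersection of this product with the single point $\{b^\ell\}\subset\mathbb{R}^{n_{\ell+1}}$ under the linear map $[x,x^\ell,v,w]\mapsto v-W^\ell x^\ell$; and (iii) apply the linear map keeping the $x$- and $w$-blocks to obtain $\mathcal{H}_{\ell+1}$, using that $w=\phi(W^\ell x^\ell+b^\ell)=x^{\ell+1}$. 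Step (i) adds the $4n_{\ell+1}$ continuous generators, $n_{\ell+1}$ binary generators, and $2n_{\ell+1}$ constraints of $\Phi_{\ell+1}$; by the generalized intersection identity, step (ii) adds no generators and exactly $n_{\ell+1}$ constraints (the target $\{b^\ell\}$ contributes none, and the matched dimension is $n_{\ell+1}$); step (iii) changes nothing. Hence each layer contributes $4n_{\ell+1}$ continuous generators, $n_{\ell+1}$ binary generators, and $3n_{\ell+1}$ constraints. The output map $y=W^L x^L+b^L$ is affine, so $\mathcal{F}$ is obtained from $\mathcal{H}_L$ by the single affine map $[x^\top\;(x^L)^\top]^\top\mapsto[x^\top\;(W^L x^L+b^L)^\top]^\top$, which adds no generators or constraints. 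Summing the per-layer contributions over $\ell=1,\dots,L$ and using $\sum_{\ell=1}^{L}n_\ell=n_N$ gives $n_g=4n_N$, $n_b=n_N$, $n_c=3n_N$; exactness of $\mathcal{F}$ follows since every operation used is exact.

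The main obstacle is the bookkeeping in the composition step: choosing the intersection map so that it enforces precisely the preactivation equalities, and verifying from the identities in \cite[Sec.~3.2]{bird2021hybrid} that intersecting with a zero-generator target adds exactly $n_{\ell+1}$ constraints and no generators — this is what pins the per-layer constraint count at $3n_{\ell+1}$ rather than letting it grow. A secondary point to state carefully is the standing assumption that $a$ is large enough that $|v_i^\ell|\le a$ for every neuron and every $x\in\mathcal{X}$; without it $\Phi_\ell$ would clip the reachable preactivations and the representation would no longer be exact.
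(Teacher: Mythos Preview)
Your proposal is correct and follows essentially the same approach as the paper: build each layer's ReLU graph as the $n_\ell$-fold Cartesian product of the single-neuron set $\Phi$ (reordered by a permutation/Kronecker map), then tie the layers together by imposing the affine preactivation equalities as $n_{\ell+1}$ additional constraints per layer, yielding the $4n_N$, $n_N$, $3n_N$ counts. The only cosmetic difference is that you carry a general input hybrid zonotope $\mathcal{X}$ through the induction and phrase the inter-layer coupling as a generalized intersection with the singleton $\{b^\ell\}$, whereas the paper fixes the box $[-a\mathbf{1}_n,a\mathbf{1}_n]$ as the domain and describes the coupling more informally as ``inter-layer constraints''; your version is slightly more explicit but not a different argument.
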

\begin{proof}
    The proof constructs the set $ \mathcal{F} \subset \mathbb{R}^{n+m} $ of extended vectors $ [ x^\top \; y^\top ]^\top \in \mathbb{R}^{n+m} $ such that $ \mathcal{F} = \{  [ x^\top \; y^\top ]^\top : y = f(x), x \in [-a\mathbf{1}_{n}, a\mathbf{1}_{n} ] \}$. Building on Lemma \ref{lem:relu}, we construct the vector-valued element-wise ReLU function to compose a full layer of the neural network and then connect the layers through the affine mapping of weights and biases. It is always possible to select the parameter $a$ large enough since the function $f$ is Lipschitz continuous over a bounded domain.
    
    For a single layer of the neural network, we can represent the vector of $ n_{\ell} $ ReLU activation functions $ x^{\ell} = \phi(v^{\ell}) $, from (3), as a set of vectors $ [ (v^{\ell})^\top \; (x^{\ell})^\top]^\top \in \mathbb{R}^{2 n_{\ell}} $ over the domain $ \mathcal{D} = [-a\mathbf{1}_{n_{\ell}}, a\mathbf{1}_{n_{\ell}}] $. Note that $ [ v^{\ell}_1 \; x^{\ell}_1 \; \cdots \; v^{\ell}_{n_{\ell+1}} \; x^{\ell}_{n_{\ell+1}} ]^\top \in \Phi \times \; \cdots \; \times \Phi \triangleeq \Phi^{n_{\ell}} \subset \mathbb{R}^{2 n_{\ell}} $ and that $ [ (v^{\ell})^\top \; (x^{\ell})^\top]^\top = T [ v^{\ell}_1 \; x^{\ell}_1 \; \cdots \; v^{\ell}_{n_{\ell+1}} \; x^{\ell}_{n_{\ell+1}} ]^\top $, where
    \begin{equation}
        T = \begin{bmatrix} \mathbf{I}_{n_{\ell}} \otimes [1 \quad 0] \\ \mathbf{I}_{n_{\ell}} \otimes [0 \quad 1] \end{bmatrix}.
    \end{equation}
    Therefore, $ [ (v^{\ell})^\top \; (x^{\ell})^\top]^\top \in T \Phi^{n_{\ell}} $ are the points where $ v^{\ell} \in [-a\mathbf{1}_{n_{\ell}}, a\mathbf{1}_{n_{\ell}}] $ and $ x^{\ell} = \phi(v^{\ell}) $. Thus the transformation $T$ reorders the inputs and outputs of the Cartesian product of individual ReLU sets into an overall input-output set for the entire layer. Note that the hybrid zonotope $ T \Phi^{n_{\ell}} $ has $ n_g = 4 n_{\ell} $ continuous generators, $ n_b = 1 n_{\ell} $ binary generators, and $ n_c = 2 n_{\ell} $ constraints based on the definition of the Cartesian product for hybrid zonotopes \cite{bird2022dissertation}. This process can be repeated for each of the $L$ hidden layers of the neural network. This leads to $n_g = 4(n_1+\dots+n_{L})$ continuous factors, $n_b = 1(n_1+\dots+n_{L})$ continuous factors, and $n_c = 3(n_1+\dots+n_{L})$ constraints. 
    
    The affine map $v^{\ell+1} = W^\ell x^\ell + b^\ell$ provides the connection between layers $\ell = 0,\dots,L-1$. For the hybrid zonotope sets, the inter-layer connections provide constraints between $x^0$ and $v^1$, $x^1$ and $v^2$, \dots, $x_{L-1}$ and $v_L$ (the final affine mapping $y = W^L x^L + b^L$ provides a transformation of the final layer to the output, but not additional constraints). Since $v^\ell$ has dimension $n_\ell$, these $L$ vector constraints provide $n_1+\dots+n_{L}$ additional constraints over the $L$ layers. This leads to a total of $n_c = 3(n_1+\dots+n_{L})$ constraints. 
\end{proof}

With the set $ \mathcal{F} $ defined, it is now possible to consider a general set of inputs $ \mathcal{X} \subset \mathbb{R}^n $, assuming $ \mathcal{X} \subseteq [-a\mathbf{1}_{n}, a\mathbf{1}_{n} ] $, and identify the corresponding set of outputs $ \mathcal{Y} \subset \mathbb{R}^m $ such that $ y \in \mathcal{Y} $ if and only if there exists $ x \in \mathcal{X} $ such that $ y = f(x) $. 

\begin{corollary} \label{cor:outputSet}
    Given a hybrid zonotope of inputs $ \mathcal{X} \subset \mathbb{R}^n $ and a hybrid zonotope $ \mathcal{F} \subset \mathbb{R}^{n+m} $ of points satisfying the input-output mapping of a feed-forward ReLU neural network with ReLU activation functions, the set of corresponding outputs $ \mathcal{Y} \subset \mathbb{R}^m $ is a hybrid zonotope with $ n_{g,y} = n_{g,x} + 4 n_N $ continuous generators, $ n_{b,y} = n_{b,x} + n_N $ binary generators, and $ n_{c,y} = n + n_{c,x} + 3 n_N $ constraints, assuming $ \mathcal{X} $ has $ n_{g,x} $ continuous generators, $ n_{b,x} $ binary generators, and $ n_{c,x} $ constraints and the neural network has a total of $n_N$ ReLU activation functions.
\end{corollary}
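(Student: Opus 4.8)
The plan is to realize $\mathcal{Y}$ as the composition of two operations for which \cite{bird2021hybrid} already provides closed-form hybrid-zonotope identities: a generalized intersection that pins the input block of $\mathcal{F}$ to $\mathcal{X}$, followed by a linear projection onto the output block. Define $R = [\mathbf{I}_n \;\; \mathbf{0}_{n\times m}] \in \mathbb{R}^{n\times(n+m)}$ and $M = [\mathbf{0}_{m\times n} \;\; \mathbf{I}_m] \in \mathbb{R}^{m\times(n+m)}$, so that for an extended vector $z = [x^\top \;\; y^\top]^\top$ we have $Rz = x$ and $Mz = y$. I would then claim the exact identity
\[
\mathcal{Y} \;=\; M\bigl(\mathcal{F} \cap_R \mathcal{X}\bigr),
\]
and prove the corollary by instantiating the intersection and linear-map identities on the right-hand side and counting generators and constraints.

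First I would verify the set equality. By the definition of the generalized intersection, $\mathcal{F} \cap_R \mathcal{X} = \{[x^\top\;\; y^\top]^\top \in \mathcal{F} : x \in \mathcal{X}\}$; since the Theorem gives $\mathcal{F}$ as the graph $\{[x^\top \;\; f(x)^\top]^\top : x \in [-a\mathbf{1}_n,a\mathbf{1}_n]\}$ and $\mathcal{X}\subseteq[-a\mathbf{1}_n,a\mathbf{1}_n]$ by assumption, this set is exactly $\{[x^\top\;\;f(x)^\top]^\top : x\in\mathcal{X}\}$, whose image under $M$ is $\{f(x):x\in\mathcal{X}\} = \mathcal{Y}$; in particular $y\in\mathcal{Y}$ iff some $x\in\mathcal{X}$ has $y=f(x)$. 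Next, the generalized-intersection identity (the same \cite[Prop.\ 7]{bird2021hybrid} used in Lemma \ref{lem:relu}) writes $\mathcal{F}\cap_R\mathcal{X}$ as a hybrid zonotope whose continuous and binary generator matrices are $[G^c_\mathcal{F}\;\;\mathbf{0}]$ and $[G^b_\mathcal{F}\;\;\mathbf{0}]$, and whose constraints stack $A_\mathcal{F}$, $A_\mathcal{X}$, and a coupling block of the form $[\,RG^c_\mathcal{F}\;\;-G^c_\mathcal{X}\,]$ (and likewise for the binary part); this yields $n_{g,x}+4n_N$ continuous generators, $n_{b,x}+n_N$ binary generators, and $n_{c,x}+3n_N+n$ constraints, the trailing $+n$ being the row count of the coupling block (one per coordinate of $Rz\in\mathbb{R}^n$). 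Finally, the linear-map identity only left-multiplies the generator matrices and center by $M$, leaving the constraint data and all generator counts unchanged, so $\mathcal{Y}=M(\mathcal{F}\cap_R\mathcal{X})$ has precisely the claimed $n_{g,y}$, $n_{b,y}$, and $n_{c,y}$.

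I expect the only nontrivial step to be the set equality $\mathcal{Y}=M(\mathcal{F}\cap_R\mathcal{X})$, and within it the observation that intersecting $\mathcal{F}$ down to inputs in $\mathcal{X}$ discards nothing precisely because $\mathcal{X}$ is contained in the box $[-a\mathbf{1}_n,a\mathbf{1}_n]$ over which $\mathcal{F}$ was constructed --- this is where the standing hypothesis on $\mathcal{X}$ (equivalently, on $a$) is essential. Everything after that point is column- and row-counting in the two identities, which is routine.
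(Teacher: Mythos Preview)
Your proposal is correct and follows essentially the same route as the paper: write $\mathcal{Y} = [\mathbf{0}_{m\times n}\;\mathbf{I}_m](\mathcal{F}\cap_{[\mathbf{I}_n\;\mathbf{0}_{n\times m}]}\mathcal{X})$, then invoke the generalized-intersection identity from \cite{bird2021hybrid} and the fact that a linear map does not change generator or constraint counts. Your write-up is in fact a bit more explicit than the paper's about why the set equality holds and where the hypothesis $\mathcal{X}\subseteq[-a\mathbf{1}_n,a\mathbf{1}_n]$ is used.
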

\begin{proof}
    Consider the vector $ [ x^\top \; y^\top ]^\top \in \mathbb{R}^{n+m} $. If $ x \in [-a\mathbf{1}_{n}, a\mathbf{1}_{n} ] $ and $ y = f(x) $, then $ [ x^\top \; y^\top ]^\top \in \mathcal{F} \subset \mathbb{R}^{n+m} $. Additionally, if $ x \in \mathcal{X} $, then $ [ x^\top \; y^\top ]^\top \in \mathcal{F} \cap_{[\mathbf{I}_n \; \mathbf{0}_{n \times m}]} \mathcal{X} $. Finally, the output set $ \mathcal{Y} = [ \mathbf{0}_{m \times n} \; \mathbf{I}_m](\mathcal{F} \cap_{[\mathbf{I}_n \; \mathbf{0}_{n \times m}]} \mathcal{X}) $, which has $ n_{g,y} = n_{g,x} + 4 n_N $ continuous generators, $ n_{b,y} = n_{b,x} + n_N $ binary generators, and $ n_{c,y} = n + n_{c,x} + 3 n_N $ constraints based on the definition of the generalized intersection from \cite{bird2021hybrid} (see Proposition 2) and the fact that the projection does not change the number of generators or constraints.
\end{proof}

A powerful outcome of Corollary \ref{cor:outputSet} is that the hybrid zonotope representation of the neural network, $\mathcal{F}$ does not need to be recomputed or reformulated if the input domain $\mathcal{X}$ changes. This is particularly appealing in the context of inspecting robustness and variation in the mapping provided by the neural network. This notion will be used in the MNIST case study in Section \ref{sec:MNIST}.


\section{Closed-loop Reachability of Linear Systems under Neural Network Control}

Consider the discrete-time linear system $x_{k+1} = A x_k + B u_k$,
where $ x_k \in \mathbb{R}^n $ are the states and $ u_k \in \mathbb{R}^m $ are the inputs. We assume closed-loop control using a neural network such that $ u_k = f(x_k) $, resulting in the closed-loop dynamics
\begin{equation}\label{eq:closedLoopDynamics}
    x_{k+1} = A x_k + B f(x_k).
\end{equation}

\begin{figure*}[t]
    \centering
    \includegraphics[width=0.35\linewidth]{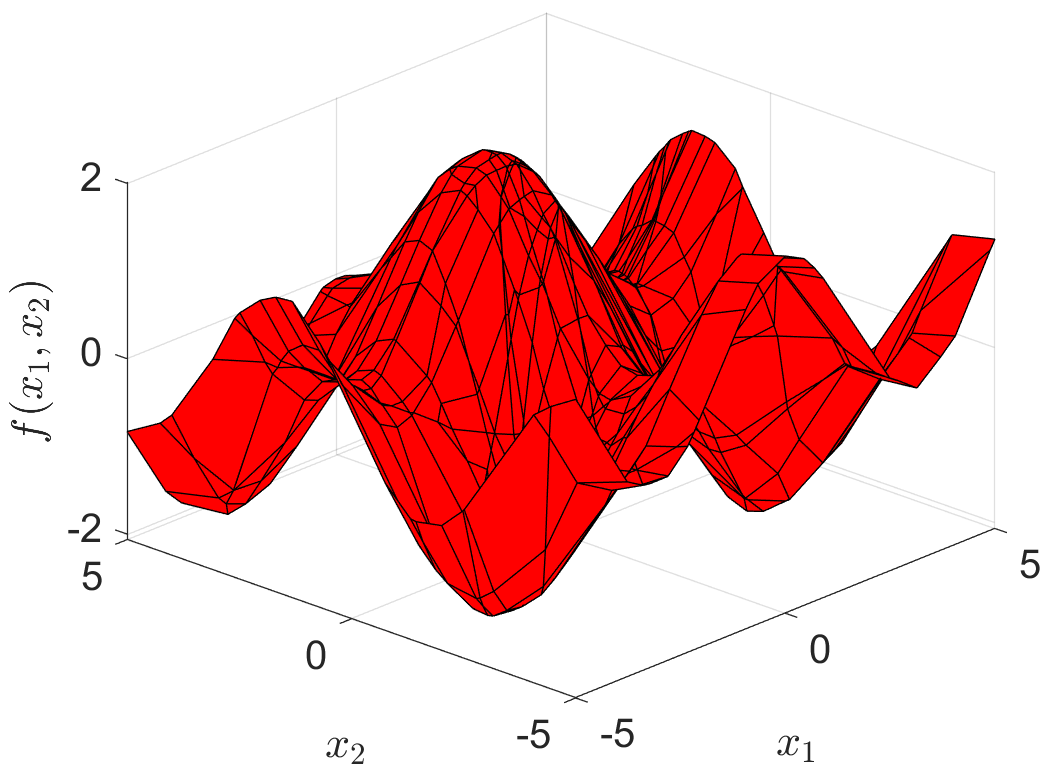}
    \hspace{0.5cm}
    {\includegraphics[width=0.25\linewidth]{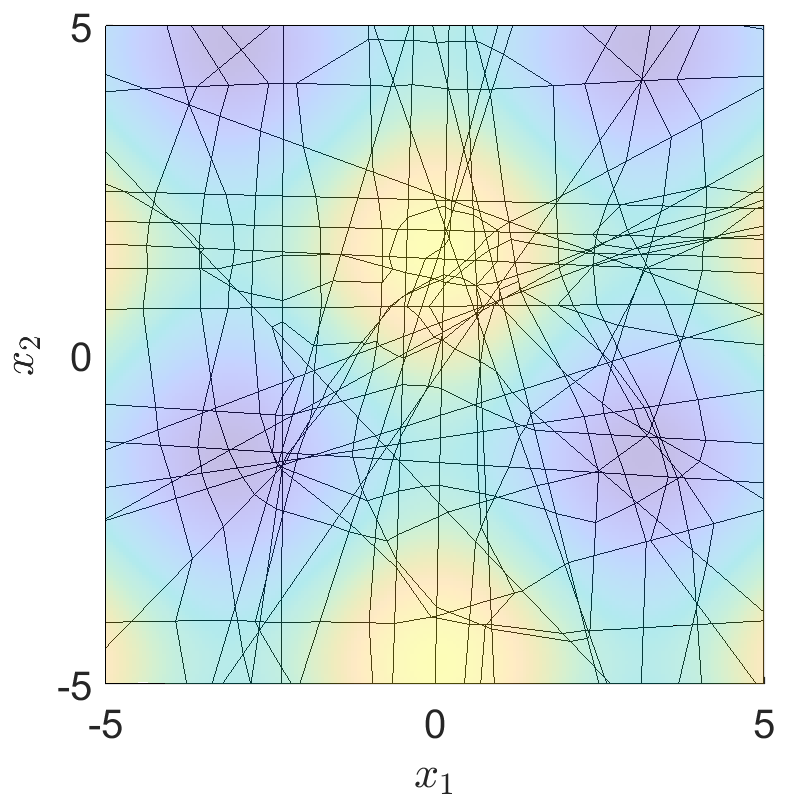}}
    \caption{(Left) Hybrid zonotope of the neural network that approximates the function $\cos(x_1)+\sin(x_2)$. (Right) Top-down view of the surface reveals the intricate faceting that corresponds to binary factors (equivalent to ReLU activations) being on/off. The colormap of the original function is overlaid to show how the faceting follows the contours.}
    \label{fig:sincos}
\end{figure*}

\begin{theorem}\label{thm:closedLoopReach}
    Assume the reachable set $ \mathcal{R}_k $ at time step $ k $ is represented as a hybrid zonotope with $ n_{g,k} $ continuous generators, $ n_{b,k} $ binary generators, and $ n_{c,k} $ constraints and that the neural network with $n_N$ ReLU activation functions is represented as a hybrid zonotope $\mathcal{F}$ with $ 4 n_N $ continuous generators, $ n_N $ binary generators, and $ 3 n_N $ constraints. Then the reachable set $ \mathcal{R}_{k+1}$ at time step $ k + 1 $ is a hybrid zonotope with $ n_g = n_{g,k} + 4 n_N $ continuous generators, $ n_b = n_{b,k} + n_N $ binary generators, and $ n_c = n + n_{c,k} + 3 n_N $ constraints.
\end{theorem}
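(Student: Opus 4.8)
The plan is to reuse the construction behind Corollary \ref{cor:outputSet} essentially verbatim, and in particular to \emph{resist} computing $\mathcal{R}_{k+1}$ as a Minkowski sum $A\mathcal{R}_k \oplus B\,\mathcal{U}_k$ of the two terms appearing in \eqref{eq:closedLoopDynamics}: doing so would both over-approximate (decoupling the $x_k$ in $Ax_k$ from the $x_k$ fed into $f$) and inflate the generator/constraint counts by double-counting the factors inherited from $\mathcal{R}_k$. Instead I would carry $x_k$ and $u_k = f(x_k)$ together through a single extended set and then apply one affine map.

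First I would build the extended state--input set. Under the standing assumption $\mathcal{R}_k \subseteq [-a\mathbf{1}_n, a\mathbf{1}_n]$ (attainable by enlarging $a$, since $\mathcal{R}_k$ is bounded and $f$ is Lipschitz), $\mathcal{F}$ faithfully encodes the map $u_k = f(x_k)$, so the set of admissible pairs is
\[
\mathcal{E}_k \;=\; \mathcal{F} \,\cap_{[\mathbf{I}_n \; \mathbf{0}_{n\times m}]}\, \mathcal{R}_k \;\subset\; \mathbb{R}^{n+m},
\]
i.e.\ exactly the intermediate object in the proof of Corollary \ref{cor:outputSet}. By the generalized-intersection identity (\cite[Prop.~2]{bird2021hybrid}), and because $\mathcal{F}$ has $4n_N$ continuous generators, $n_N$ binary generators and $3n_N$ constraints while $\mathcal{R}_k$ has $n_{g,k}$, $n_{b,k}$, $n_{c,k}$, the hybrid zonotope $\mathcal{E}_k$ has $n_{g,k} + 4n_N$ continuous generators, $n_{b,k} + n_N$ binary generators, and $n + n_{c,k} + 3n_N$ constraints --- the $n$ extra constraints enforcing that the first $n$ coordinates of a point of $\mathcal{F}$ lie in $\mathcal{R}_k$.

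Next I would collapse $\mathcal{E}_k$ with a single linear map. The closed-loop update is $x_{k+1} = A x_k + B f(x_k) = [A \; B]\,[x_k^\top \; u_k^\top]^\top$ for $[x_k^\top \; u_k^\top]^\top \in \mathcal{E}_k$, so $\mathcal{R}_{k+1} = [A \; B]\,\mathcal{E}_k$; the linear-map identity for hybrid zonotopes (\cite[Sec.~3.2]{bird2021hybrid}) changes neither the number of continuous generators, nor of binary generators, nor of constraints. This gives exactly $n_g = n_{g,k} + 4n_N$, $n_b = n_{b,k} + n_N$, $n_c = n + n_{c,k} + 3n_N$, completing the proof.

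The hard part is not any calculation --- the intersection and linear-map identities do all the arithmetic --- but rather the modeling decision just described: recognizing that $\mathcal{R}_{k+1}$ must be generated from the single coupled set $\mathcal{E}_k$, so that the factors of $x_k$ common to the $Ax_k$ and $Bf(x_k)$ terms are shared rather than duplicated. This is precisely the ``zonotope memory'' effect highlighted in the example of Section \ref{sec:HybZono}; stating it correctly, and noting the domain condition $\mathcal{R}_k \subseteq [-a\mathbf{1}_n, a\mathbf{1}_n]$ on which it relies, is the only real subtlety.
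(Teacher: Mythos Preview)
Your proposal is correct and mirrors the paper's own proof: form the extended set $\mathcal{F}\cap_{[\mathbf{I}_n\;\mathbf{0}_{n\times m}]}\mathcal{R}_k$ and then apply the single linear map $[A\;B]$, with the generator and constraint counts following from the generalized-intersection and linear-map identities. The paper's argument is in fact terser than yours (it omits the explicit count bookkeeping and the domain remark), but the construction is identical.
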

\begin{proof}
    The proof is similar to that of \textbf{Corollary \ref{cor:outputSet}}. First, we can write \eqref{eq:closedLoopDynamics} as $ x_{k+1} = [A \; B] [ x_k^\top \; u^\top ]^\top $. Then we note that $ [ x_k^\top \; u^\top ]^\top \in \mathcal{F} \cap_{[\mathbf{I}_n \; \mathbf{0}_{n \times m}]} \mathcal{R}_k$ such that $ x_{k+1} \in [A \; B](\mathcal{F} \cap_{[\mathbf{I}_n \; \mathbf{0}_{n \times m}]} \mathcal{R}_k)$.
\end{proof}

The result of \textbf{Theorem \ref{thm:closedLoopReach}} can be repeated recursively starting from $ \mathcal{R}_0 = \mathcal{X}_0 $ to compute the reachable set $ \mathcal{R}_k$ at any time step $ k $. We omit the simple proof to satisfy space constraints.

\begin{corollary} \label{cor:reachableset}
     Assume the initial conditions set $\mathcal{R}_0$ is represented as a hybrid zonotope  with $ n_{g,0} $ continuous generators, $ n_{b,0} $ binary generators, and $ n_{c,0} $ constraints and that the neural network with $n_N$ ReLU activation functions is represented as a hybrid zonotope $\mathcal{F}$ with $ 4 n_N $ continuous generators, $ n_N $ binary generators, and $ 3 n_N $ constraints. Then the hybrid zonotope set representation complexity of the reachable set $ \mathcal{R}_k $ at time step $ k $ grows linearly with $ k $, resulting in $ n_{g,k} = n_{g,0} + 4 k n_N $ continuous generators, $ n_{b,k} = n_{b,0} + k n_N $ binary generators, and $ n_{c,k} = n_{c,0} + k(n+3 n_N) $ constraints.
\end{corollary}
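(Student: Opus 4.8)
The plan is to prove the three complexity identities by induction on the time step $k$, using \textbf{Theorem \ref{thm:closedLoopReach}} for the inductive step and the remark immediately preceding the corollary, namely that the recursion $\mathcal{R}_{k+1} = [A\;B]\,(\mathcal{F} \cap_{[\mathbf{I}_n\;\mathbf{0}_{n\times m}]}\mathcal{R}_k)$ can be iterated starting from $\mathcal{R}_0 = \mathcal{X}_0$.

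First I would dispatch the base case $k = 0$: by hypothesis $\mathcal{R}_0$ is a hybrid zonotope with $n_{g,0}$ continuous generators, $n_{b,0}$ binary generators, and $n_{c,0}$ constraints, which is exactly what the claimed formulas give at $k=0$. Then, assuming the formulas hold for some $k$ — so that $\mathcal{R}_k$ is in particular a hybrid zonotope with $n_{g,k} = n_{g,0}+4kn_N$, $n_{b,k}=n_{b,0}+kn_N$, $n_{c,k}=n_{c,0}+k(n+3n_N)$ — I would apply \textbf{Theorem \ref{thm:closedLoopReach}} to $\mathcal{R}_k$ together with the fixed network representation $\mathcal{F}$ (having $4n_N$, $n_N$, $3n_N$ generators and constraints, the same at every step by \textbf{Corollary \ref{cor:outputSet}}). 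This produces $\mathcal{R}_{k+1}$ as a hybrid zonotope with
\begin{align*}
    n_{g,k+1} &= n_{g,k} + 4 n_N = n_{g,0} + 4(k+1)n_N, \\
    n_{b,k+1} &= n_{b,k} + n_N = n_{b,0} + (k+1)n_N, \\
    n_{c,k+1} &= n + n_{c,k} + 3 n_N = n_{c,0} + (k+1)(n + 3n_N),
\end{align*}
which closes the induction. Since each count is incremented by a fixed constant per step, the sequences are arithmetic in $k$, so the set-representation complexity grows linearly, as claimed.

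I do not expect a genuine obstacle here — the substance is entirely in \textbf{Theorem \ref{thm:closedLoopReach}}, and what remains is the telescoping of three arithmetic recurrences. The only points deserving a sentence of care are (i) verifying that the hypotheses of \textbf{Theorem \ref{thm:closedLoopReach}} persist along the recursion, i.e.\ that $\mathcal{R}_k$ remains a hybrid zonotope and that the same $\mathcal{F}$ is reusable at every step (true because $\mathcal{F}$ encodes the input-output map over the fixed box $[-a\mathbf{1}_n, a\mathbf{1}_n]$, independent of $k$), and (ii) the standing assumption that every reachable set stays inside that box so that the exact representation remains valid — the analogue of the ``$a$ large enough'' caveat used elsewhere in the paper. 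Neither point introduces any real difficulty.
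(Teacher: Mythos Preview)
Your proposal is correct and matches the paper's approach: the paper explicitly states that the corollary follows by repeating \textbf{Theorem \ref{thm:closedLoopReach}} recursively from $\mathcal{R}_0=\mathcal{X}_0$ and omits the proof as simple, which is precisely the induction you carry out. One small quibble: the fixed complexity of $\mathcal{F}$ is established by \textbf{Theorem 1}, not \textbf{Corollary \ref{cor:outputSet}}, so adjust that citation.
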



In the following result, we demonstrate how the reachable sets over time can be stacked to provide additional interpretability with regard to the history of the control policy used. This result will be used in the context of the MPC case study in Section \ref{sec:MPC}.

\begin{corollary} \label{cor:reachability_stacked}
     Assume the initial conditions set $\mathcal{R}_0$ is represented as a hybrid zonotope  with $ n_{g,0} $ continuous generators, $ n_{b,0} $ binary generators, and $ n_{c,0} $ constraints and that the neural network with $n_N$ ReLU activation functions is represented as a hybrid zonotope $\mathcal{F}$ with $ 4 n_N $ continuous generators, $ n_N $ binary generators, and $ 3 n_N $ constraints. Then the hybrid zonotope set representation of the aggregate reachable set $[x_0^\top \; x_1^\top \; \cdots \; x_k^\top]^\top \in \mathcal{R}_{0:k} = \mathcal{R}_0 \times \mathcal{R}_1 \times \cdots \times \mathcal{R}_k $ has complexity $ n_{g,k} = n_{g,0} + 4 k n_N $ continuous generators, $ n_{b,k} = n_{b,0} + k n_N $ binary generators, and $ n_{c,k} = n_{c,0} + (n+3n_N)\sum_{\kappa=0}^k \kappa  $ constraints.
\end{corollary}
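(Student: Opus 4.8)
The plan is to build $\mathcal{R}_{0:k}$ from the individual reachable sets $\mathcal{R}_0,\dots,\mathcal{R}_k$ — whose complexities are supplied by Corollary~\ref{cor:reachableset} — using the Cartesian product identity for hybrid zonotopes from \cite{bird2022dissertation}, and then exploiting the ``memory'' of the zonotope factors (exactly as in the Zonotope Memory example) to collapse the redundant generators and the redundant initial-condition constraints. First I would record that a naive application of the Cartesian product stacks the generator matrices and the constraint matrices block-diagonally, so by itself it reports $\sum_{\kappa=0}^k n_{g,\kappa}$ continuous generators, $\sum_{\kappa=0}^k n_{b,\kappa}$ binary generators, and $\sum_{\kappa=0}^k n_{c,\kappa}$ constraints; the real work is identifying which of these are shared.

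For the generators I would unwind the recursion of Theorem~\ref{thm:closedLoopReach}: since $\mathcal{R}_{\kappa+1} = [A\;B](\mathcal{F}\cap_{[\mathbf{I}_n\;\mathbf{0}]}\mathcal{R}_\kappa)$ and the generalized intersection appends the factors of $\mathcal{F}$ to those of $\mathcal{R}_\kappa$ without introducing any others, the factor vector of $\mathcal{R}_\kappa$ is, for every $\kappa\le k$, a sub-vector of the factor vector of $\mathcal{R}_k$ — namely the factors inherited from $\mathcal{R}_0$ together with the $4n_N$ continuous and $n_N$ binary factors contributed by each of the first $\kappa$ copies of $\mathcal{F}$. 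Hence the Cartesian product can be re-expressed over the single factor set of $\mathcal{R}_k$, with the generator rows for time step $\kappa$ simply padded by zeros in the columns of the later factors. The number of continuous and binary generators of $\mathcal{R}_{0:k}$ therefore equals that of $\mathcal{R}_k$ alone, i.e. $n_{g,k}=n_{g,0}+4kn_N$ and $n_{b,k}=n_{b,0}+kn_N$, in agreement with Corollary~\ref{cor:reachableset}.

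For the constraints, the same recursion shows that the constraint block of $\mathcal{R}_\kappa$ is literally contained (up to zero-padding of the later factors) in the constraint block of $\mathcal{R}_{\kappa'}$ whenever $\kappa\le\kappa'$; in particular the $n_{c,0}$ constraints defining $\mathcal{R}_0$ appear, row for row, inside every one of the $k+1$ blocks. Starting from the naive count $\sum_{\kappa=0}^k n_{c,\kappa} = \sum_{\kappa=0}^k\bigl(n_{c,0}+\kappa(n+3n_N)\bigr) = (k+1)\,n_{c,0} + (n+3n_N)\sum_{\kappa=0}^k\kappa$ and deleting the $k$ duplicated copies of the initial-condition constraints leaves $n_{c,0} + (n+3n_N)\sum_{\kappa=0}^k\kappa$ constraints, which is the claimed bound. (By the same memory argument one could go on to remove the further duplicated inter-step constraints and recover the linear count of $\mathcal{R}_k$, but keeping them makes each $\mathcal{R}_\kappa$'s contribution explicit, which is the purpose of the stacked representation used in Section~\ref{sec:MPC}.)

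I expect the main obstacle to be the bookkeeping in the second and third paragraphs: making precise that the Cartesian product of the chain $\mathcal{R}_0,\dots,\mathcal{R}_k$ may legitimately be taken over one common, shared set of factors — i.e. formalizing the ``memory'' identity beyond the two-step instance illustrated in the Zonotope Memory example — and thereby justifying that the generator counts collapse to those of $\mathcal{R}_k$ while only the $\mathcal{R}_0$-constraints are struck from the constraint side.
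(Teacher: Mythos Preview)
Your proposal is correct and follows essentially the same route as the paper: invoke Corollary~\ref{cor:reachableset} for the per-step counts, observe that a naive Cartesian product would sum them, and then use the shared-factor ``memory'' from the recursion $\mathcal{R}_{\kappa+1}=[A\;B](\mathcal{F}\cap_{[\mathbf{I}_n\;\mathbf{0}]}\mathcal{R}_\kappa)$ to collapse the generator counts to those of $\mathcal{R}_k$ while stacking the constraints. Your explicit bookkeeping on the constraint side --- identifying and deleting exactly the $k$ duplicated copies of the $n_{c,0}$ initial-condition constraints to reach the stated $n_{c,0}+(n+3n_N)\sum_{\kappa=0}^k\kappa$ --- is in fact more precise than the paper's own argument, which simply asserts that ``the constraints do stack'' pairwise and then jumps to the final count.
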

\begin{proof}
Following from Corollary \ref{cor:reachableset}, the hybrid zonotope $\mathcal{R}_{0:k}$ is the Cartesian product of sets $\mathcal{R}_\kappa$ for $\kappa = 1,2,\dots,k$ each with $ n_{g,\kappa} = n_{g,0} + 4 \kappa n_N $ continuous generators, $ n_{b,\kappa} = n_{b,0} + \kappa n_N $ binary generators, and $ n_{c,\kappa} = n + n_{c,0} + 3 \kappa n_N $ constraints. If these sets were unrelated, the resulting hybrid zonotope would have $\sum_{\kappa=0}^k n_{g,\kappa}$ continuous generators, $\sum_{\kappa=0}^k n_{b,\kappa}$ binary generators, and $\sum_{\kappa=0}^k n_{c,\kappa}$.

However, from the proof of Theorem \ref{thm:closedLoopReach}, $ \mathcal{R}_{\kappa+1} = [A \; B](\mathcal{F} \cap_{[\mathbf{I}_n \; \mathbf{0}_{n \times m}]} \mathcal{R}_\kappa$), hence $\mathcal{R}_{\kappa+1}$ and $\mathcal{R}_\kappa$ share common factors. Namely, all continuous and binary factors that characterize $\mathcal{R}_\kappa$ also contribute to characterize $\mathcal{R}_{\kappa+1}$. Thus the extended vector $[x_\kappa^\top \; x_{\kappa+1}^\top]^\top$ belongs to a hybrid zonotope with $\max(n_{g,\kappa},n_{g,\kappa+1})$ continuous and $\max(n_{b,\kappa},n_{b,\kappa+1})$ binary generators. The constraints do stack and become $n_{c,\kappa}+n_{c,\kappa+1}$.

Extending this pairwise relationship forward,  $\mathcal{R}_{0:k}$ has $\max(n_{g,0},\dots,n_{g,k})=n_{g,k}=n_{g,0}+4kn_N$ continuous generators, $\max(n_{b,0},\dots,n_{b,k})=n_{b,k}=n_{b,0}+4kn_N$ binary generators, and $n_{c,0}+(n+3n_N)\sum_{\kappa=0}^k \kappa$ constraints.
\end{proof}

\section{Applications \& Demonstrations}

The neural networks in this paper are feed-forward fully-connected neural networks with only ReLU activation units, and specified by the number of layers and width of each layer, e.g., [4,10,7,2] is a network with inputs in $\mathbb{R}^4$, outputs in $\mathbb{R}^2$, and with two hidden layers of 10 and 7 ReLU units, respectively. The networks are trained in MATLAB using the stochastic gradient descent optimizer with momentum (0.95) and 100 epochs. The hybrid zonotopes have been coded in MATLAB. Optimization problems have been solved using GUROBI \cite{optimization2021llc}. These examples are conducted on a laptop computer using one core of an 1.9 GHz Intel i7 processor and 16GB of RAM.

\begin{figure*}[t]
    \centering
    \includegraphics[width=0.4\linewidth]{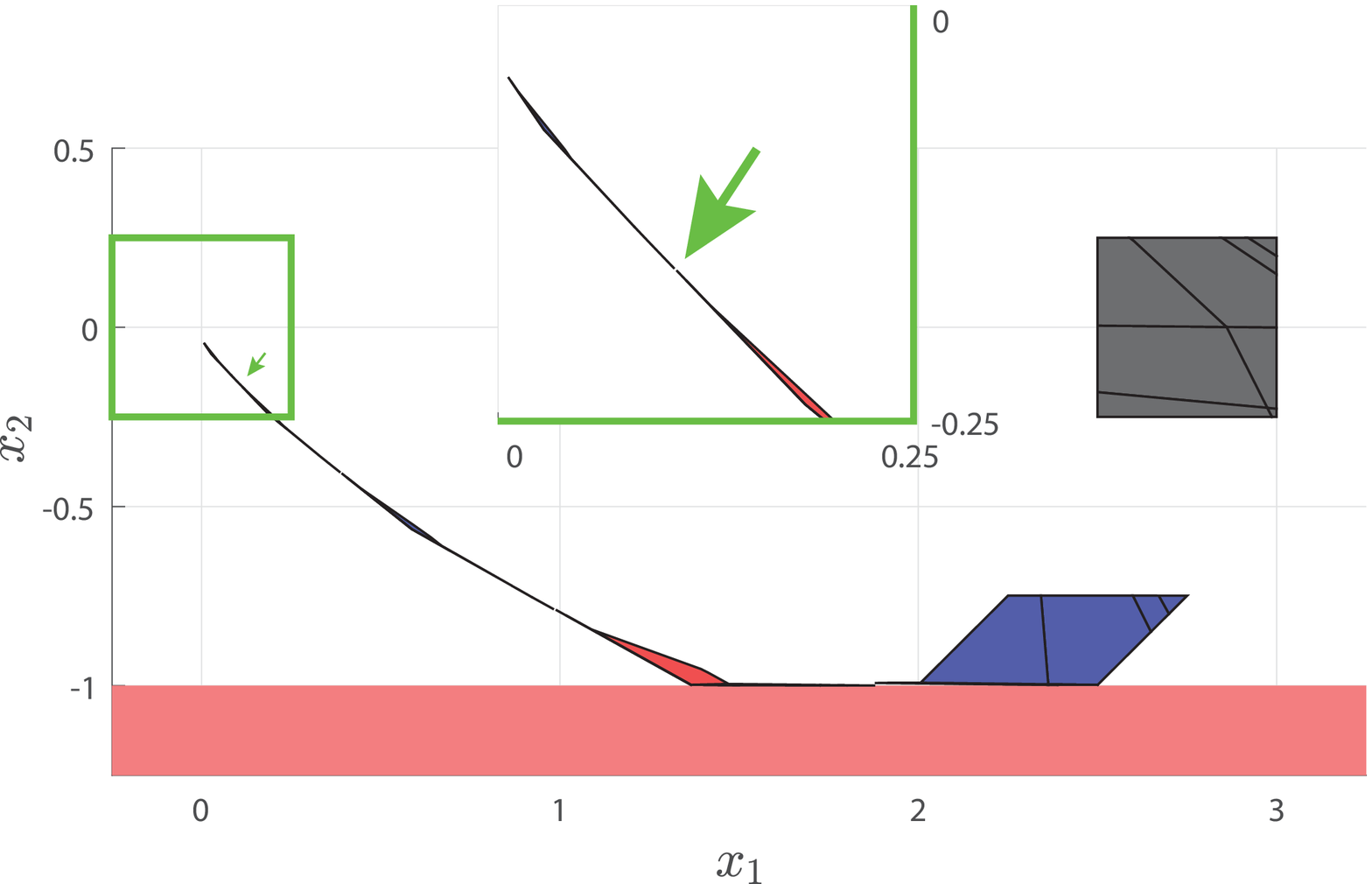}
    \hspace{0.1cm}
    \includegraphics[width=0.35\linewidth]{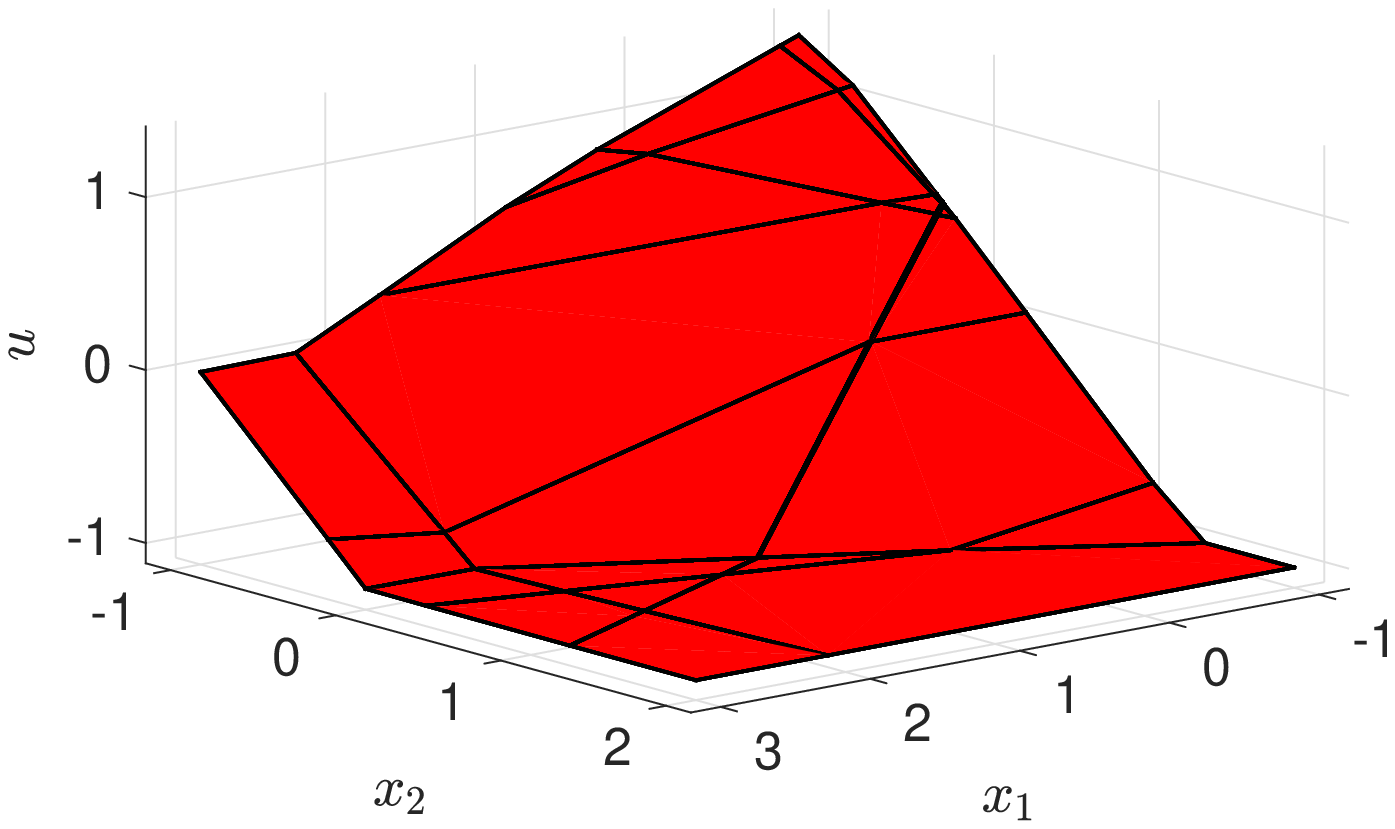}
    \caption{(Right) The hybrid zonotope representation of the neural network trained to learn an MPC policy to reach the origin while avoiding constraints, e.g., avoid $x\leq -1$. (Left) The evolution of the system state following the inputs produced by the neural network model. The initial state set at $k=0$ is shown in gray, and steps $k=1,\dots,5$ alternate color between blue and red. The inset plot shows (green arrow) the break between the $k=4$ and $k=5$ reachable set to show that the final state set sits within the goal set (green box).}
    \label{fig:MPC}
\end{figure*}

\subsection{Nonlinear Function Approximation} \label{sec:surface}

Over the input domain $[-5,5]\times[-5,5]$ we train a ReLU feed-forward neural network with layer sizes [2,20,10,10,1] to approximate the function $\cos(x_1)+\sin(x_2)$ by grid sampling 400 points.

There are $20+10+10=40=n_N$ total ReLU functions in the network and we build a hybrid zonotope $\mathcal{F}$ capturing the input-output mapping using the extended vector $[x^\top y^\top]^\top$. The hybrid zonotope has $n+4n_N = 2 + 4\times 40=162$ continuous generators, $n_N=40$ binary generators, and $3n_N=120$ constraints. Of the $2^{40}$ possible combinations of binary factors, 829 satisfy the constraints, leading to 829 constrained zonotopes which appear as facets of the surface plotted in Figure \ref{fig:sincos}. There is a one-to-one correspondence between these 829 feasible binary factor combinations and the combinations of the 40 ReLU units being active ($>0$) or inactive ($\leq 0$).

\subsection{Model Predictive Control Policy \& Closed-Loop Reachability Analysis} \label{sec:MPC}


We consider a double integrator system discretized with a sampling time of 1 second,
\begin{equation}
    x_{k+1} = \begin{bmatrix}1&1\\0&1\end{bmatrix}x_k + \begin{bmatrix}0.5\\1\end{bmatrix}u_k.
\end{equation}
As in \cite{zhang2022reachability,hu2020reach,everett2021reachability}, an MPC control policy is designed to stablize the system to the origin while respecting the state and input constraints, $x_k\in[-5,5]\times[-1,1]$ and $u_k\in[-1,1]$. A prediction horizon of 10 steps is used, with state and input weighting matrices $Q=I_2$ and $R=I$, the terminal region $\mathcal{O}_\infty^\text{LQR}$, and with terminal penalty matrix $P_\infty$, the solution of the discrete time algebraic Riccati equation. We grid the state space over $[-0.5,3]\times[-1,1]$ to produce 10,201 state-input pairs to train a neural network with layer sizes $[2,8,4,1]$.

The mapping of state-to-control provided by the neural network is visualized as a hybrid zonotope surface in three dimensions in Figure \ref{fig:MPC} (right). This hybrid zonotope, with $n=2$ and $n_N=4+8=12$, has 50 continuous factors, 12 binary factors, and 36 constraints. Each facet of the surface corresponds to a different combination of binary factors. Although there are $2^{12}$ different factor combinations, there are only 32 facets in the surface. The remaining binary factor combinations do not satisfy the linear constraints.

We now use the hybrid zonotope representation of the trained neural network to analyze the closed-loop reachability of the system, similar to the approach taken for an explicit MPC controller \cite{bird2022explicitmpc}. We consider an initial state (the gray square in Figure \ref{fig:MPC}), $x(0)\in[2.5,3]\times[-0.25,0.25]$. Following Corollary \ref{cor:reachability_stacked}, we construct the hybrid zonotope for the extended vector of states across time, $[x_0^\top \; \cdots \; x_5^\top]^\top$. 
Despite the 60 binary generators in the extended state hybrid zonotope, there are only 8 binary factor combinations (8 different sequences of linear control policies) that satisfy the constraints. Part of the utility of the extended state vector is that the initial set (gray square) is originally specified as an unconstrained zonotope with a center and two continuous generators. As seen in Figure \ref{fig:MPC}, the initial set is faceted by constraints at later time steps to reveal the sets of initial conditions that correspond to different sequences of linear feedback control policies.

Suppose that we wish to check that the system reaches the goal set $[-0.25,0.25]\times[-0.25,0.25]$ (green square in Figure \ref{fig:MPC}) for any initial condition in the gray region. Although this can be confirmed by visual inspection, it can also be rigorously ensured by evaluating set containment. The containment check of two hybrid zonotopes can be posed as a feasibility evaluation of a mixed-integer linear program.

\subsection{Classification Robustness on MNIST} \label{sec:MNIST}

The MNIST dataset is a canonical classification problem in which $28\times 28$ pixel images of handwritten numbers are classified into the digits 0-9. For clarity of presentation, we consider here the classification task to classify the digits ``1'' and ``7''. We train a ReLU feedfoward network with layer sizes [784,5,5,1] on a corpus of 13,007 images of the digits 1 and 7, in which the input is the vectorized image (stacking columns) and the output value $+1$ denotes a ``1'' and the output value $-1$ denotes a ``7''. The trained network achieves an accuracy of 99.5\% on a test bank of 2,163 images of the digits 1 and 7. Figure \ref{fig:MNIST} (left) shows the neural network output over the test image set, with outliers denoting images that are classified poorly and - in a handful of cases - incorrectly. Inset are example images of ``the 1 that looks most like a 7'' (worst 1) and ``the 7 that looks most like a 1'' (worst 7). These examples show that the output range over the images that are 1s spans nearly the entire interval [-1,1] (same is true for 7s) despite the overall good performance of 99.5\%. 


\begin{figure*}[t]
    \centering
    \includegraphics[width=0.45\linewidth]{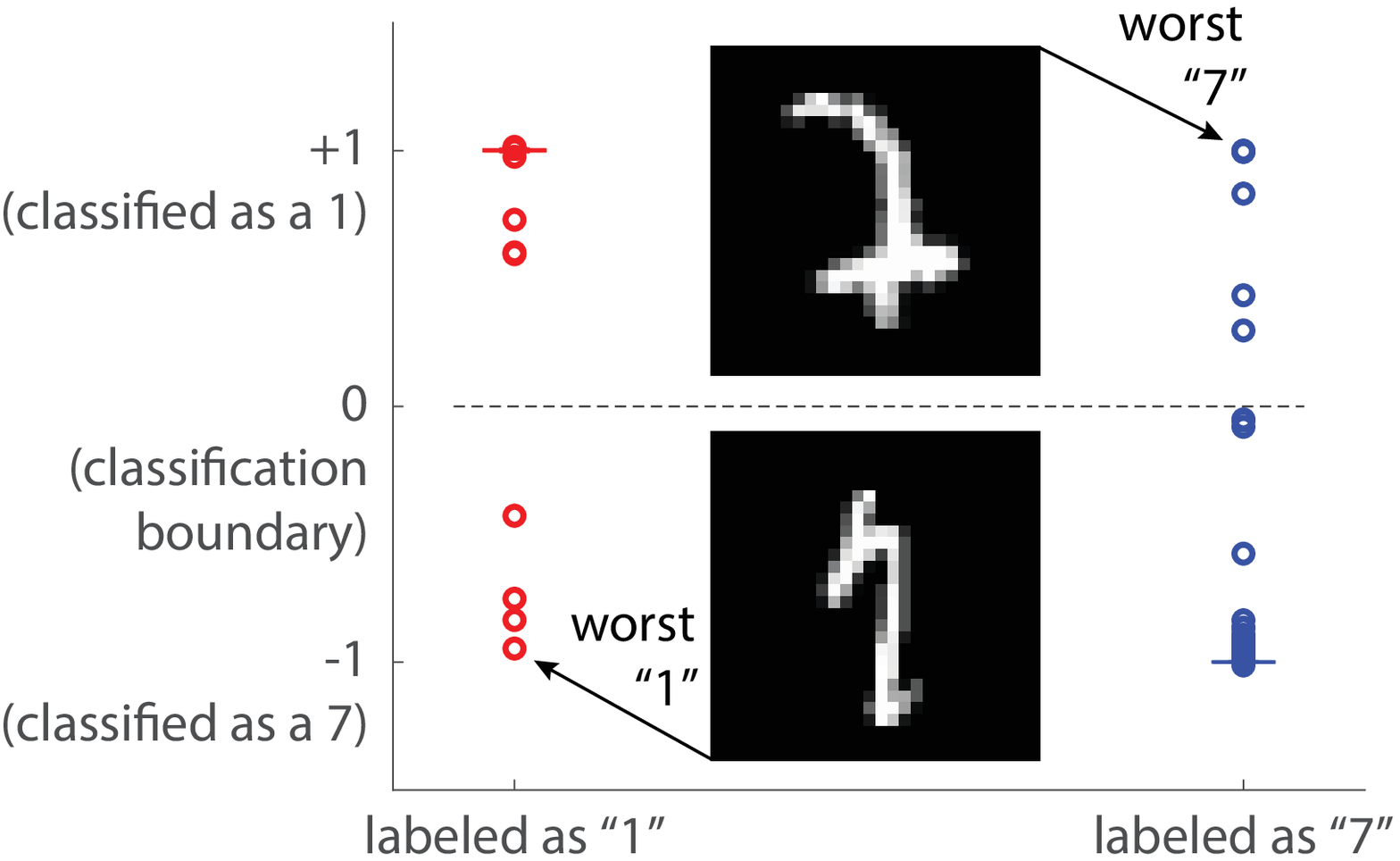}
    \hspace{0.5cm}
    \includegraphics[width=0.4\linewidth]{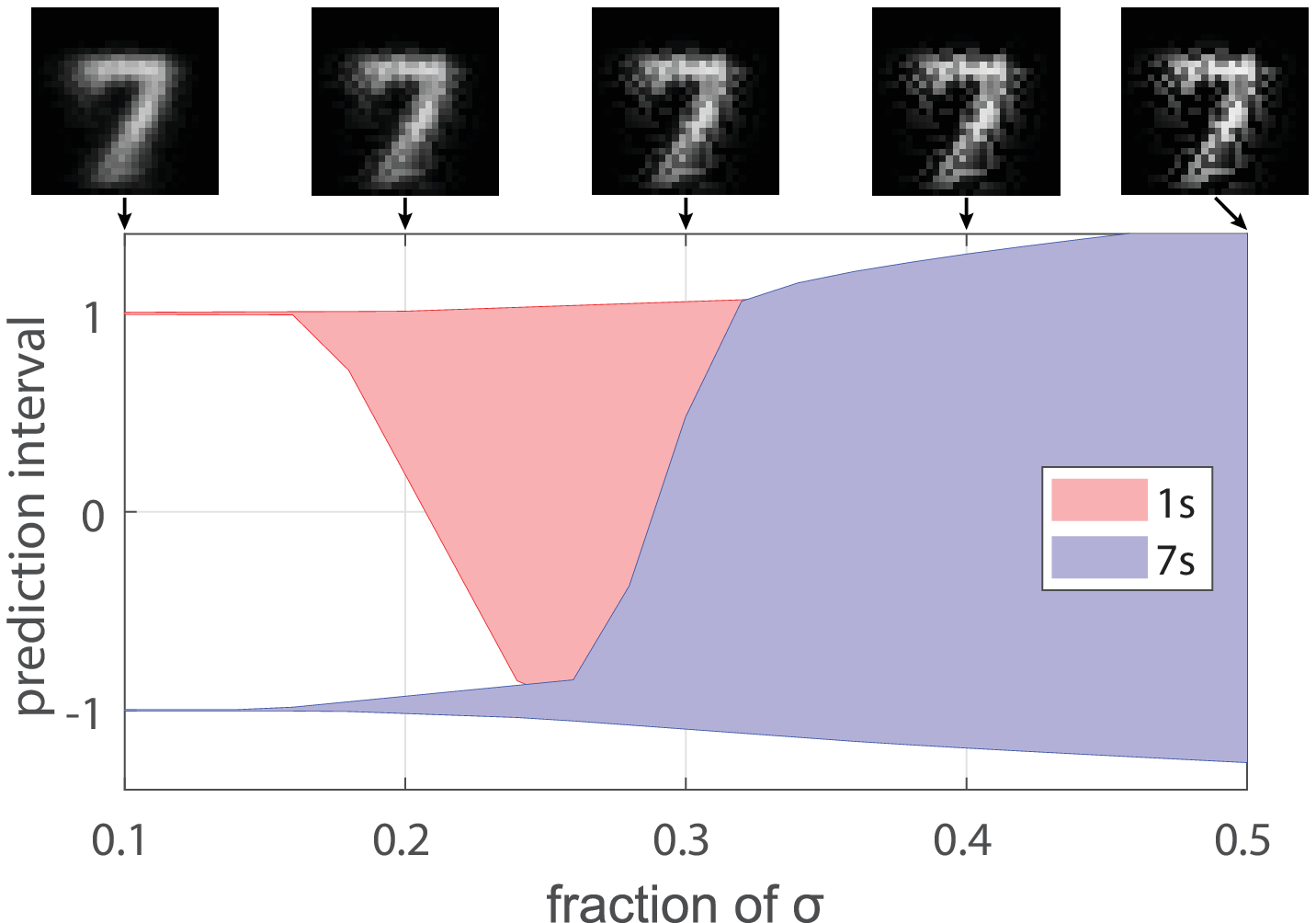}
    \caption{(Left) The trained neural network achieves classification accuracy of 99.5\% on the test image set, with outliers distributed across the entire output range $[-1,1]$. Inset plots show example images of worst case classification. (Right) The output range of the neural network, computed using the hybrid zonotope representation, over input sets centered at the nominal (mean) image of a ``1'' or a ``7'' and with expanding size moving left to right. On top, sample images from the 7s input set (defined in the text) from randomly generated binary factors at different fractions of $\sigma$.}
    \label{fig:MNIST}
\end{figure*}

The hybrid zonotope representation of the network allows us to explore this output range more rigorously as a function of the input space. We define the \textit{1s input space} as a zonotope whose center is the mean value of the vectorized images of the digit 1 (both training and testing). Defining $\sigma\in\mathbb{R}^{784}$ as the standard deviation of the same vectorized images, the (continuous) generator matrix of the 1s input set is $G_c = \text{diag}(\sigma)$. It is important to note that while the input set has been defined by the images, the input set is now the interval box ``containing'' one standard deviation away from the mean image in 784-dimensional space. Thus this 1s input set contains the entire continuous region around the nominal image of the digit 1. We follow the same procedure to produce the \textit{7s input space}. With these input spaces, the hybrid zonotope representing the neural network has $784+4\times 10=824$ continuous generators, $10$ binary generators, and $30$ constraints.

We now consider an adjustment to these input sets such that $G_c=\text{diag}(\alpha\sigma)$, where $0<\alpha\leq 1$ specifies a fractional value of the standard deviation. When $\alpha$ approaches zero, the input set approaches a single point, located at the center (mean) value. Given the classification accuracy of the neural network, this mean point will be reliably classified correctly, hence we expect the classification output range of the 1s input set with small $\alpha$ to be very near $+1$ and the output range of the 7s input set with small $\alpha$ to be very near $-1$. As $\alpha$ grows, we expect that the output range to grow since we observed that misclassifications occur in the test dataset. Figure \ref{fig:MNIST} (right) demonstrates how the hybrid zonotope representation of the neural network allows exact characterization of the upper and lower bounds of the output hybrid zonotope subject to the 1s input set and the 7s input set. The calculation of the upper and lower bounds along each dimension of a hybrid zonotope can be cast as a mixed-integer linear optimization problem.

Figure \ref{fig:MNIST} (right) demonstrates that although the classification accuracy is good on the test images, relatively small perturbations (in this case perturbations on the order of $0.3\sigma$) are sufficient to compromise the classification of some images. The sample images from the 7s input set shows that sample images from input sets corresponding to $\alpha=0.3$ are still human-identifiable as their correct digit. Being able to quantify the output bounds on a network over a given input domain is a powerful tool to probe the robustness of a neural network. This is especially true in the context of adversarial attacks on neural networks, in which small adjustments can lead to large changes in output value (e.g., classification outcome) \cite{eykholt2018robust}. 

\section{Conclusions}
In this paper, we have shown that it is possible to exactly represent a ReLU feed-forward neural network as a hybrid zonotope. This approach provides the ability to characterize variation and robustness of the mapping given by the neural network and offers rigorous ways to certify safety and reachability in closed-loop control application.

A direction for future work is to formulate the Lipschitz constant for the neural network directly from the hybrid zonotope representation. Also, although the binary factors scale linearly with the number of ReLU neurons in the network, focusing on ways to best leverage advancements in mixed-integer linear optimization solvers will unlock the ability to analyze larger and larger networks.

\section{ACKNOWLEDGMENTS}
We thank Neera Jain and Trevor Bird for sharing their MATLAB code for hybrid zonotopes.

\bibliographystyle{IEEEtran}
\bibliography{refs}

\end{document}